\definecolor{refkey}{gray}{1}
\definecolor{labelkey}{gray}{1}
\newcommand{\conj}{\mathbf{C}}
\DeclareMathOperator{\RR}{\mathbb R}       
\def\CCC{{\mathcal C}}     
\def\DDD{{\mathcal D}}     
\def\UUU{{\mathcal U}}     %
\DeclareMathOperator{\through}{,\dotsc,}
\DeclareMathOperator{\oo}{\{-1,1\}}    
\def\to{{\rightarrow}}
\def\poly{\mathop{\rm{poly}}\nolimits}
\def\eps{{\epsilon}}        %
\DeclareMathOperator{\E}{\mathbb{E}}
\newcommand{\angles}[1]{\langle #1 \rangle}
\newcommand{\abs}[1]{\lvert#1\rvert}
\newcommand{\Abs}[1]{\left\lvert#1\right\rvert}
\newcommand{\norm}[1]{\lVert#1\rVert}
\newcommand{\Paren}[1]{\left(#1 \right )}
\newcommand{\brac}[1]{[#1 ]}
\newcommand{\Brac}[1]{\left[#1 \right]}
\newcommand{\set}[1]{\{#1\}}
\newcommand{\Stab}{\mathbb S}       
\newcommand{\fhat}{{\hat{f}}}     
\newcommand{\ind}{{\textbf{1}}}    
\newcommand{\etal}{{et al.\ }}
\newcommand{\NB}{{N.B.:\ }}
\newcommand{\ignore}[1]{}
\newcommand{\hnote}[1]{\footnote{{\sf [Homin: {#1}\sf ] }}}
\newtheorem{theorem}{Theorem}
\newtheorem{lemma}[theorem]{Lemma}
\newtheorem{definition}[theorem]{Definition}
\newtheorem{corollary}[theorem]{Corollary}
\newtheorem{proposition}[theorem]{Proposition}
\newenvironment{proof}{\noindent {\em Proof.}}{\medskip}
\def\squareforqed{\hbox{\(\blacksquare\)}}
\def\qed{\ifmmode\squareforqed\else{\unskip\nobreak\hfill\penalty50\hskip1em\null\nobreak\hfil\squareforqed\parfillskip=0pt\finalhyphendemerits=0\endgraf}\fi}
\title{Submodular Functions Are Noise Stable}
\author{Mahdi Cheraghchi\\
UT Austin\\
{\tt mahdi@cs.utexas.edu} 
 \and 
Adam Klivans \\
UT Austin\\
{\tt klivans@cs.utexas.edu} 
\and
Pravesh Kothari\\
UT Austin\\
{\tt kothari@cs.utexas.edu} 
\and 
Homin K. Lee\thanks{Supported by 
  NSF grant 1019343 subaward CIF-B-108}\\
UT Austin\\
{\tt homin@cs.utexas.edu} }
\begin{document}
\maketitle
\ifoptiondraft{\draftbox}

\begin{abstract}
We show that all non-negative submodular functions have high {\em
  noise-stability}. 
As a consequence, we obtain a polynomial-time learning algorithm for
this class with respect to any product distribution on $\{-1,1\}^n$
(for any constant accuracy parameter $\epsilon$).  Our algorithm also
succeeds in the agnostic setting.  Previous work on learning
submodular functions required either query access or strong
assumptions about the types of submodular functions to be learned (and
did not hold in the agnostic setting).
\end{abstract}


\section{Introduction}
A function $f:2^{[n]}\rightarrow \RR$ is \emph{submodular} if
\[\forall S,T \subseteq [n]: f(S \cup T) + f(S \cap T) \leq f(S) + f(T).\]
Submodular functions have been extensively studied in the context of combinatorial
optimization \cite{Edmonds-1971,NWF-1978,FNW-1978,Lovasz-1983} where 
the functions under consideration (such as the cut function of a graph) are submodular.
An equivalent formulation of submodularity is that of decreasing 
marginal returns,
\[ \forall S\subseteq T \subseteq [n], i\in [n]\setminus T: 
f(T \cup \set{i})-f(T) \leq f(S \cup \set{i})-f(S),\]
and thus submodular functions are also a topic of study in economics and the algorithmic game theory community \cite{DNS-2006,MosselRoch-2007}.  In most contexts, the submodular functions considered are non-negative \cite{DNS-2006,FMV-2007,MosselRoch-2007,Vondrak-2009,OveisGharanVondrak-2011,BalcanHarvey-2011,GHRU-2011}, 
and we will be focusing on non-negative submodular functions as well.

The main contribution of this paper is a proof that non-negative submodular
functions are {\em noise stable.}  Informally, a noise stable function
$f$ is one whose value on a random input $x$ does not change much if
$x$ is subjected to a small, random perturbation.  Noise stability is
a fundamental topic in the analysis of Boolean functions with
applications in hardness of approximation, learning theory, social
choice, and pseudorandomness \cite{KKL-1988,Hastad-1997,BKS-1999,ODonnell-2002,KOS-2002,MOO-2005}.

In order to define noise stability, we first define a noise operator that
acts on $\oo^n$.

\begin{definition}[Noise operators]\label{def:noiseop}
For any product distribution $\Pi=\Pi_1\times \Pi_2 \times \cdots \times \Pi_n$ over $\oo^n$,
$\rho\in [0,1]$, $x\in \oo^n$, let the random variable $y$ drawn from the
distribution $N_\rho(x)$ over $\oo^n$
have $y_i=x_i$ with probability $\rho$ and be randomly drawn from $\Pi_i$
with  probability $1-\rho$. 
The \emph{noise operator} $T_\rho$ on $f:\oo^n\rightarrow \RR$ is defined by letting
$T_\rho f:\oo^n\rightarrow \RR$ be the function given by $T_\rho f(x)=\E_{y\sim N_\rho(x)} f(y)$.
\end{definition}
\noindent\NB For the uniform distribution $y\sim N_\rho(x)$ has
 $y_i=x_i$ with probability $1/2 + \rho/2$, and $y_i=-x_i$ with probability
$1/2-\rho/2$.

Now we can precisely define noise stability:

\begin{definition}[Noise stability] The \emph{noise stability} of $f$ at noise rate $\rho$ is defined to be
\[\Stab_\rho(f)=\angles{f,T_\rho f} = \E_{x\sim\Pi}[f(x) T_\rho f (x)].\]
\end{definition}

The precise statement of our main theorem is as follows (see Section~\ref{sec:prelim} for definitions):
\begin{theorem}\label{thm:stability}
 Let $\Pi=\Pi_1\times \Pi_2 \times \cdots \times \Pi_n$ 
be a product distribution over $\oo^n$  
with  minimum probability $p_{min}$ 
and let $f:\oo^n\rightarrow \RR^{+}$ be a submodular function. 
Then for all, $\rho\in [0,1]$, 
\[\Stab_\rho(f)\geq (2\rho - 1 + 2p_{min}( 1 - \rho)) \norm{f}_2^2.\]
\end{theorem}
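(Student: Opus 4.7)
The plan is to reformulate the claim as an upper bound on a Dirichlet-type energy and then combine non-negativity of $f$ (which will handle the single-coordinate part of the noise coupling) with submodularity (which will handle the multi-coordinate part).

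First, since the $\rho$-correlated pair $(x,y)$ has $y\sim\Pi$ marginally, the identity
$$\mathrm{Stab}_\rho(f) \;=\; \|f\|_2^2 \,-\, \tfrac{1}{2}\,\mathbb{E}_{(x,y)}[(f(x)-f(y))^2]$$
holds, and a short rearrangement turns the claim into the equivalent upper bound
$$\mathbb{E}_{(x,y)}[(f(x)-f(y))^2] \;\leq\; 4(1-\rho)(1-p_{min})\|f\|_2^2.$$
I would then condition on the random resample set $R\subseteq[n]$, where each $i\in R$ independently with probability $1-\rho$, and expand $\mathbb{E}[(f(x)-f(y))^2] = \sum_R P(R)\,\mathbb{E}[(f(x)-f(x_{\bar R},y_R))^2\mid R]$, where given $R$ the coordinates $y_R$ are freshly drawn from $\Pi_R$.

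As a warm-up, the case $|R|=1$ uses only non-negativity. Writing $p_i^+$ and $p_i^-$ for the probabilities under $\Pi_i$, the pointwise bound $(a-b)^2\le a^2+b^2$ for $a,b\ge 0$, together with the weighted-average inequality $p_i^+p_i^-(A+B)\le (1-p_{min})(p_i^+A+p_i^-B)$ (valid for $A,B\ge 0$ because both $p_i^+$ and $p_i^-$ are at most $1-p_{min}$), gives the single-coordinate Dirichlet bound
$$p_i^+p_i^-\,\mathbb{E}_{x_{-i}}\!\left[(f(x^{i\to 1})-f(x^{i\to -1}))^2\right] \;\le\; (1-p_{min})\|f\|_2^2,$$
which handles the $|R|=1$ contribution with precisely the required constant.

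For $|R|\ge 2$, non-negativity alone is insufficient---the (non-submodular) indicator of the all-ones string already violates the target bound for $n\ge 3$---so submodularity must enter. I would invoke the decreasing-marginal-returns form: the discrete derivative $\partial_i f(x)=(f(x^{i\to 1})-f(x^{i\to -1}))/2$ is non-increasing in each $x_j$ for $j\ne i$. This monotonicity allows a controlled telescoping of $f(x)-f(x_{\bar R},y_R)$ along an ordering of $R$ that respects the sign of each flip, avoiding the naive $|R|^2$ blowup one would otherwise incur from Cauchy--Schwarz on the telescope. The main obstacle I expect is carrying this telescope through with the sharp $(1-\rho)(1-p_{min})$ scaling; a natural fallback is induction on $n$, conditioning on a single coordinate to decompose $f$ into two submodular functions $f_\pm = f(\cdot,\pm 1)$ on $\{-1,1\}^{n-1}$ and controlling the cross inner product $\langle f_+, T_\rho f_-\rangle$ using the fact, again provided by submodularity, that $f_+ - f_-$ is pointwise monotone.
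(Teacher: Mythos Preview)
Your proposal is not a proof yet; the $|R|\ge 2$ case is the entire content of the theorem and you only gesture at it. Neither of your two suggestions closes the gap. For the telescope, you correctly note that squaring a sum of $|R|$ increments costs a factor $|R|$ via Cauchy--Schwarz, and ``ordering $R$ to respect the sign of each flip'' does not by itself remove that loss; you would need a pointwise domination of the squared difference, and you have not produced one. For the induction, conditioning on a single coordinate leaves you with the cross term $\langle f_+, T_\rho f_-\rangle$, which is not a stability and for which you offer no bound; ``$f_+-f_-$ is pointwise monotone'' is true but does not obviously control that inner product.

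There is also a small error in your warm-up. With the paper's definition $p_{min}=\min_i \Pr[x_i=1]$, one has $p_i^-=1-p_i\le 1-p_{min}$, but $p_i^+=p_i$ need \emph{not} be at most $1-p_{min}$, so the inequality $p_i^+p_i^-(A+B)\le (1-p_{min})(p_i^+A+p_i^-B)$ can fail. Thus even your $|R|=1$ step is not valid as written under this convention.

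The paper's route is quite different and much shorter: it proves the \emph{pointwise} inequality
\[
T_\rho f(x)\ \ge\ \bigl(2\rho-1+2p_{min}(1-\rho)\bigr)\,f(x)
\]
for every $x$, and then simply multiplies by $f(x)\ge 0$ and takes expectation. The pointwise bound is obtained by fixing $x$, choosing a maximal chain $\emptyset=X_0\subset X_1\subset\cdots\subset X_n=[n]$ that passes through the set $X$ associated to $x$, telescoping $f(Y)=f(X_0)+\sum_j\bigl(f(Y\cap X_j)-f(Y\cap X_{j-1})\bigr)$, and applying decreasing marginal returns once per step to replace each increment by $\mathbf{1}[\pi(j)\in Y]\,(f(X_j)-f(X_{j-1}))$. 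Taking $\E_{Y\sim N_\rho(X)}$ turns these indicators into explicit probabilities; after the resulting Abel summation only $f(X_0),f(X),f(X_n)$ survive with nonnegative coefficients, and non-negativity lets you drop all but $f(X)$. There is no decomposition by $|R|$, no squared-difference expansion, and no induction.
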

\noindent\NB For the uniform distribution we get the bound
$\Stab_\rho(f)\geq \rho\norm{f}_2^2.$

Given the high noise-stability of submodular functions, we can apply
known results from 
computational learning theory to show that submodular functions are well-approximated
by low-degree polynomials and can be learned agnostically.   Our main
learning result is as follows:

\begin{corollary}\label{cor:fullpower}
Let $\CCC$ be the class of non-negative submodular functions with $\norm{f}_2=1$ 
and let $\DDD$ be any distribution on $\oo^n\times \RR$ such that
the marginal distribution over $\oo^n$ is a product distribution.
Then there is a statistical query algorithm that outputs a hypothesis $h$ 
with probability $1-\delta$
such that
\[\E_{(x,y)\sim \DDD}[|h(x) - y|] \leq opt +\eps,  \]
in time
$\poly(n^{O(1/\eps^2)},\log(1/\delta))$,
\end{corollary}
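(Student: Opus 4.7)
The plan is to deduce the corollary from Theorem~\ref{thm:stability} via the standard two-step recipe: (i) convert the noise-stability lower bound into low-degree Fourier concentration for the class $\CCC$, and (ii) feed this concentration into an off-the-shelf polynomial-regression agnostic learner implemented in the SQ model.

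For step (i), I expand $f$ in the $\Pi$-orthonormal Fourier basis $\{\chi_S\}$, which diagonalizes $T_\rho$ with eigenvalues $\rho^{|S|}$, so $\Stab_\rho(f)=\sum_S\rho^{|S|}\hat f(S)^2$. Splitting this sum at degree $k$ and bounding the high-degree portion by $\rho^{k+1}$ gives
\[\sum_{|S|>k}\hat f(S)^2 \;\leq\; \frac{\norm{f}_2^2-\Stab_\rho(f)}{1-\rho^{k+1}} \;\leq\; \frac{2(1-p_{min})(1-\rho)}{1-\rho^{k+1}},\]
using Theorem~\ref{thm:stability} and $\norm{f}_2=1$. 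As $\rho\to 1$ the right-hand side tends to $2(1-p_{min})/(k+1)$, so choosing $k=O(1/\eps^2)$ (treating $p_{min}$ as a constant) forces $\norm{f-f^{\leq k}}_2^2\leq \eps^2$, whence $\norm{f-f^{\leq k}}_1\leq \eps$ by Cauchy--Schwarz on the probability measure $\Pi$.

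For step (ii), I invoke the Kalai--Klivans--Mansour--Servedio polynomial $L_1$-regression paradigm: any class whose members admit degree-$k$, $L_1$-error-$\eps$ approximations is agnostically learnable to error $opt+\eps$ in time $n^{O(k)}$. In the SQ model this becomes an empirical $L_2$-projection onto degree-$k$ polynomials: for each $|S|\leq k$ I estimate the coefficient $\E_{(x,y)\sim\DDD}[y\cdot\chi_S(x)]$ with a single statistical query (this is legitimate because the marginal of $\DDD$ on $\oo^n$ is the product distribution $\Pi$, so the characters $\chi_S$ are explicit functions of $x$), and output the resulting polynomial (clipped to $[0,\infty)$ if desired). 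Substituting $k=O(1/\eps^2)$ yields the promised $\poly(n^{O(1/\eps^2)},\log(1/\delta))$ runtime.

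The main obstacle I anticipate is the SQ translation in step (ii), since the original KKMS learner solves a linear program for $L_1$ that is not immediately SQ. The standard workaround is to compare the $L_2$-projection $p$ with the low-degree approximation $\tilde f^*$ of the best-fitting submodular concept $f^*\in\CCC$: the projection satisfies $\E[(p(x)-y)^2]\leq \E[(\tilde f^*(x)-y)^2]$, and Cauchy--Schwarz together with $\norm{\tilde f^*-f^*}_1\leq\eps$ and the normalization $\norm{f^*}_2=1$ converts this $L_2$ guarantee into the desired $\E[|p(x)-y|]\leq opt+\eps$. This conversion is well-established in the SQ agnostic learning literature, so the genuinely new content is entirely the concentration bound of step (i), which is where Theorem~\ref{thm:stability} is used.
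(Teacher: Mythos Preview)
Your step~(i) is correct and matches the paper's route exactly: the ``folklore'' lemma there converts the stability bound of Theorem~\ref{thm:stability} into degree-$O(1/\eps^2)$ $L_2$-concentration, and your limit computation $(1-\rho)/(1-\rho^{k+1})\to 1/(k+1)$ is just a reparametrization of the same estimate.

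Step~(ii), however, has a real gap. The $L_2$-projection $p(x)=\sum_{|S|\leq k}\E_{\DDD}[y\,\chi_S(x)]\,\chi_S(x)$ is indeed SQ-computable, and it does minimize $\E_\DDD[(q(x)-y)^2]$ over degree-$k$ polynomials $q$. But the agnostic benchmark $opt$ is defined as an $L_1$ quantity, $opt=\min_{f\in\CCC}\E_\DDD[|f(x)-y|]$, and your proposed conversion does not go through. From $\E[(p-y)^2]\leq\E[(\tilde f^*-y)^2]$ and Cauchy--Schwarz you get $\E[|p-y|]\leq \sqrt{\E[(\tilde f^*-y)^2]}$, and the right-hand side is controlled by $\E[(f^*-y)^2]$, not by $\E[|f^*-y|]=opt$. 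In the agnostic model the labels $y$ are arbitrary reals, so $\E[(f^*-y)^2]$ can be unbounded even when $opt$ is tiny; and even with bounded labels you would only recover $\sqrt{opt}$ rather than $opt$. This is exactly why Kalai--Klivans--Mansour--Servedio insist on $L_1$ (LP-based) polynomial regression rather than $L_2$ regression: the $L_2$ projection simply does not give an $opt+\eps$ guarantee in $L_1$.

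The paper handles this differently: it invokes the KKMS $L_1$ Polynomial Regression Algorithm as a black box for the $opt+\eps$ guarantee, and then separately cites (as a nontrivial fact, attributed to Kalai) that this $L_1$ regression can itself be implemented with statistical queries. So the SQ claim in the corollary is not obtained by replacing $L_1$ regression with the easier $L_2$ projection, but by asserting that the $L_1$ regression LP admits an SQ implementation. Your proposal either needs to invoke that same fact, or to supply an actual SQ reduction from $L_1$ regression---the ``standard workaround'' you sketch is not one.
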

Here $opt$ is the $L_1$-error of the best fitting concept in the concept class.
(See Section~\ref{sec:learn} for the precise definition.)
Note that the above algorithm will succeed given only statistical
 query access \cite{Kearns-1993} to the underlying function to be learned.
It can be shown that the $L_2$-norm of a submodular function is always
within a constant factor of its mean squared.
Thus, the algorithm can estimate the $L_2$-norm of the submodular
function $f$ to very high accuracy using Chernoff-Hoeffding bounds
and scale the function by its mean so that its $L_2$-norm is 1.

\subsection{Related Work}
Recently the study of learning submodular functions was initiated in two
very different contexts. Gupta \etal \cite{GHRU-2011} gave an algorithm for learning 
bounded submodular functions that arose as a technical necessity for differentially privately releasing 
the class of disjunctions. Their learning algorithm requires value query access to the target function, but
their algorithm works even when the value queries are answered with additive error (value queries
that are answered with additive error at most $\tau$ are said to be $\tau$-tolerant). 
\begin{theorem}[\cite{GHRU-2011}]\label{thm:ghru}
Let $\eps,\delta>0$ and let $\Pi$ be any product distribution over $[n]$. There is a learning algorithm that  
when given ($\eps/4$-tolerant) value query access to any submodular function $f:2^{[n]}\rightarrow [0,1]$, 
outputs a hypothesis $h$ in time $n^{O(\log(1/\delta)/\eps^2)}$ such that,
\[\Pr_{S\sim \Pi}\Brac{\abs{f(S)-h(S)} \leq \eps}\geq 1-\delta.\]
\end{theorem}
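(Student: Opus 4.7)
The plan is to establish the theorem via the low-degree polynomial paradigm: show that any bounded submodular function $f:\{0,1\}^n\to[0,1]$ is well approximated in $L_2$ (under the product distribution $\Pi$) by a polynomial of degree $d = O(\log(1/\delta)/\eps^2)$ in the orthogonal Fourier basis for $\Pi$, and then use $\eps/4$-tolerant value queries to estimate each of the $n^{O(d)}$ low-degree Fourier coefficients by sampling. The hypothesis $h$ is the corresponding truncated expansion.

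The heart of the proof is the existence of a low-degree approximator, which I would derive from a concentration-of-measure bound for submodular functions. Since $f:\{0,1\}^n\to[0,1]$ is submodular, its discrete derivatives are bounded in $[-1,1]$ and it satisfies a self-bounding property: the ``local variance'' $\sum_i (f(S) - f(S\oplus\{i\}))_+^2$ is controlled by a functional of $f$ itself. Invoking Boucheron--Lugosi--Massart style concentration (the entropy method for self-bounding functions) yields a sub-Gaussian tail $\Pr_{S\sim\Pi}[|f(S)-\E[f]|>t]\leq 2\exp(-\Omega(t^2))$. A standard hypercontractivity-type argument then bounds the tail of the Fourier expansion: $\sum_{|T|>d}\widehat{f}(T)^2 \leq O(\eps^2\delta)$ for $d = O(\log(1/\delta)/\eps^2)$.

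Given this approximation, I would estimate each Fourier coefficient $\widehat{f}(T)$ with $|T|\leq d$ to additive error $O(\eps/n^{d/2})$ using $\poly(n^d,1/\eps,\log(1/\delta))$ tolerant value queries, controlling the sampling error by Chernoff--Hoeffding bounds and noting that the $\eps/4$ tolerance per query contributes only a proportionally bounded error to each estimated coefficient. The hypothesis $h$ built from these estimates then satisfies $\|f-h\|_2^2 \leq O(\eps^2\delta)$, so by Markov's inequality $\Pr_{S\sim\Pi}[|f(S)-h(S)|>\eps] \leq \delta$, matching the stated guarantee.

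The main obstacle I anticipate is establishing the sub-Gaussian concentration cleanly for submodular functions over arbitrary product distributions with the right constants, so that the final degree is $O(\log(1/\delta)/\eps^2)$ rather than something worse. The self-bounding/entropy-method machinery for submodular functions is technical, and one has to verify that submodularity still implies the right form of self-boundedness when the product distribution is non-uniform. A secondary subtlety is ensuring the orthogonal basis for $\Pi$ admits coefficients that can be estimated from value queries alone, which requires each $\Pi_i$ to have full support and the basis functions to have controllable $L_\infty$ norm.
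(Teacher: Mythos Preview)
This theorem is quoted from \cite{GHRU-2011} as related work; the present paper does not prove it. The only description the paper gives of the original argument is that it ``crucially relies on its query access to the submodular function in order to break the function down into Lipschitz continuous parts that are easier to learn.'' Your plan is therefore not the same as the cited proof: Gupta \etal use value queries \emph{structurally}, to locate a partition of the cube into regions on which $f$ is Lipschitz, and then learn each piece; they do not go through Fourier concentration.

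Your proposed route has a genuine gap at the step ``sub-Gaussian concentration $\Rightarrow$ low-degree Fourier tail.'' For a function bounded in $[0,1]$ the inequality $\Pr[|f-\E f|>t]\le 2e^{-\Omega(t^2)}$ is vacuous (the left side is zero once $t>1$), so it carries no information. More substantively, concentration of $f$ around its mean simply does not imply that $\sum_{|T|>d}\hat f(T)^2$ is small: a uniformly random Boolean function is bounded and perfectly concentrated, yet almost all of its Fourier mass sits at level $n/2$. There is no ``standard hypercontractivity-type argument'' that converts tail bounds on $f$ into Fourier-tail bounds; hypercontractivity goes the other direction (from spectral information to moment/tail information). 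The self-bounding/entropy machinery you invoke yields concentration, not spectral decay.

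Ironically, the low-degree approximation you want \emph{is} available, but through the mechanism this paper develops rather than through concentration: Lemma~\ref{lem:genthresh} shows $T_\rho f \ge c_\rho f$ pointwise for non-negative submodular $f$, hence $\Stab_\rho(f)\ge c_\rho\|f\|_2^2$, and that is what forces $\sum_{|T|>d}\hat f(T)^2$ to decay. If you replace your concentration step by this noise-stability argument, the rest of your outline (estimate the $n^{O(d)}$ low-degree coefficients from tolerant value queries, apply Markov) does go through and in fact recovers a statement comparable to Theorem~\ref{thm:ghru} from random examples alone, which is exactly the paper's Corollary~\ref{cor:fullpower}.
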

The learning algorithm of Gupta \etal crucially relies on its query access to the submodular function
in order to break the function down into Lipschitz continuous parts that are easier to learn.
Compare this to Corollary~\ref{cor:fullpower}, which has similar learning guarantees, but where the learner
only has access to statistical queries and can learn
in the agnostic model of learning. (See Section~\ref{sec:learn}.)

The other recent work \cite{BalcanHarvey-2011} on learning submodular functions
was motivated by bundle pricing and used passive supervised learning as
a model for learning consumer valuations of added options.
In particular, they have a polynomial-time algorithm that can learn
(using random examples only) {\em
  monotone}, non-negative, submodular
functions within a {\em multiplicative} factor of $\sqrt{n}$ over
\emph{arbitrary} distributions.  
As our machinery breaks down over non-product distributions, none of our results hold
in this setting.
For product distributions, Balcan and Harvey
gave the first $\poly(n,1/\eps)$-time algorithm that can learn
(using random examples only) {\em
  monotone}, non-negative, {\em Lipschitz} submodular
functions with minimum value $m$ within a {\em multiplicative} factor of $O(\log(1/\eps)/m)$.

\ignore{
with a better guarantee
for learning 
than $\sqrt{n}$
that is more
comparable to ours.
\begin{theorem}[\cite{BalcanHarvey-2011}]\hnote{is this statement correct? can someone check that it is, and that
our algorithm is actually superior?}
Let $\alpha>0$ and let $\Pi$ be any product distribution over $[n]$. 
There is a learning algorithm that  
when given access to random examples drawn from $\Pi$ labeled by any 
monotone, $(1/n)$-Lipschitz,
submodular 
function $f:2^{[n]}\rightarrow [0,1]$
with minimum value $m$, 
outputs a hypothesis $h$ in time $n^{O(\log(1/\beta)/\alpha^2)}$ such that,
\[\Pr_{S\sim \DDD}\Brac{h(S) \leq f(S) \leq O\Paren{\frac{\log(1/\eps)}{m}} h(S)} \geq 1-\eps.\]
 \end{theorem}
}

\subsection{Applications to Differential Privacy}
We discuss some applications to differential privacy in
Section~\ref{sec:private}.  In particular, we obtain a simple proof of Gupta \etal \cite{GHRU-2011}'s 
recent result on releasing disjunctions with improved parameters. 

\section{Preliminaries}
\label{sec:prelim}

Throughout, we will identify sets
$S\subseteq [n]$ with their indicator vectors $\ind(S)\in\oo^n$ where $\ind(S)_i=1$ 
if $i\in S$ and $\ind(S)_i=-1$ if $i\not\in S$ 
(as opposed to the usual $(0,1)$-indicator vectors).
For any distribution over $\Pi$ over $\oo^n$, 
we define the inner product on functions
$f,g:\oo^n\rightarrow \RR$ by $\angles{f,g}=\E_{x\sim \Pi}[f(x)g(x)]$
and the $L_2$-norm of a function of $f$ as $\norm{f}_2=\sqrt{\angles{f,f}}=\sqrt{\E_{x\sim \Pi} [f(x)^2]}$.

\begin{definition}[Minimum probability] 
Let $\Pi=\Pi_1\times \Pi_2 \times \cdots \times \Pi_n$
be a product distribution over
$\oo^n$, and let
$p_i:=\Pr_{x_i\sim \Pi_i}[x_i=1]$, then $p_{min} = \min_{i\in[n]} \{p_i\}$.
\end{definition}

\section{Submodular Functions are Noise Stable}
We will start by showing that submodular functions are noise stable under 
the uniform distribution as a warm-up as the notation is less cumbersome in this setting.
In Section~\ref{subsec:prod} we will prove Theorem~\ref{thm:stability} in the
general setting
of arbitrary product distributions.

\subsection{Uniform Distribution}\label{subsec:unif}
For the rest of Section~\ref{subsec:unif} we will assume that the distribution over inputs is
uniform.

Let the Fourier expansion of $f$ be given by $\sum_{S\subseteq [n]} \fhat(S)\chi_S$, it can be shown 
that 
\[T_\rho f(x)=\sum_{S\subseteq [n]} \rho^{|S|}\fhat(S)\chi_S(x), 
\hspace{1cm} \textrm{ and thus }\hspace{1cm}
\Stab_\rho(f) = \sum_{S\subseteq [n]} \rho^{|S|}\fhat(S)^2.\]

The following lemma is our key observation.
\begin{lemma}\label{lem:thresh}
 Let $f:\oo^n\rightarrow \RR$ be a submodular function.
Then for all $x\in \oo^n$, $\rho\in [0,1]$, 
\[T_\rho f(x) \geq \rho f(x)
+ ((1-\rho)/2)(f(-1^n)+f(1^n)).\]
For  $f:\oo^n\rightarrow \RR^+$,
$T_\rho f(x) \geq \rho f(x)$.
\end{lemma}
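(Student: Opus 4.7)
The plan is to apply submodularity of $f$ twice, each time via the pointwise inequality $f(a) + f(b) \geq f(a \vee b) + f(a \wedge b)$ (with $\vee, \wedge$ denoting coordinatewise max and min under the ordering $-1 < 1$), set up by carefully chosen couplings.

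First, I would construct a coupling of two samples $y, y' \sim N_\rho(x)$ designed so that their coordinatewise extrema take a clean form. Independently for each coordinate $i$, with probability $\rho$ set $y_i = y'_i = x_i$ (and declare $i \in S$), and with probability $1-\rho$ set $\{y_i, y'_i\} = \{+1,-1\}$ with the assignment chosen uniformly. A direct check confirms that each of $y, y'$ is marginally distributed as $N_\rho(x)$, while by construction $y \vee y' = (x_S, 1^{S^c})$ and $y \wedge y' = (x_S, -1^{S^c})$, where $S$ is a $\rho$-biased random subset of $[n]$ and $(x_S, \sigma \cdot 1^{S^c})$ denotes the vector equal to $x_i$ on $i \in S$ and to $\sigma$ on $i \in S^c$. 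Applying submodularity pointwise in the coupling and then taking expectations yields
\[ 2\, T_\rho f(x) \;\geq\; \E_S\bigl[f(x_S, 1^{S^c}) + f(x_S, -1^{S^c})\bigr]. \]

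Next, for each $\sigma \in \{-1,+1\}$ I would show
\[ \E_S\bigl[f(x_S, \sigma \cdot 1^{S^c})\bigr] \;\geq\; \rho\, f(x) + (1-\rho)\, f(\sigma \cdot 1^n). \]
The key step is verifying that $h_\sigma(S) := f(x_S, \sigma \cdot 1^{S^c})$ is itself a submodular function of $S \subseteq [n]$: its discrete derivative in direction $i$ is either zero (when $\sigma = x_i$) or minus a discrete derivative of $f$ in coordinate $i$ evaluated at a base set that shrinks monotonically as $S$ grows, so submodularity of $f$ supplies the needed monotonicity of marginals. Granted this, the displayed bound reduces to the general claim that $\E[h(S_\rho)] \geq \rho\, h([n]) + (1-\rho)\, h(\emptyset)$ for any submodular $h:2^{[n]}\to\RR$ and $\rho$-biased subset $S_\rho$. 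I would prove the latter by induction on $n$: conditioning on whether $n \in S_\rho$, apply the inductive hypothesis to the two submodular restrictions $h(\cdot)$ and $h(\cdot \cup \{n\})$ on $[n-1]$, and close with the single submodularity relation $h(\{n\}) + h([n-1]) \geq h([n]) + h(\emptyset)$.

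Combining the two steps for $\sigma = \pm 1$ gives $2\, T_\rho f(x) \geq 2\rho f(x) + (1-\rho)(f(1^n) + f(-1^n))$, which is the first claim; the non-negative version follows immediately from $f(\pm 1^n) \geq 0$. The main obstacle I expect is Step 1: finding the right correlated coupling so that the boundary vectors $\pm 1^{S^c}$ materialize on the right-hand side, rather than the softer averaged quantities that an independent coupling of $y$ and $y'$ would produce.
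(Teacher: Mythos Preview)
Your proposal is correct, and the argument is genuinely different from the paper's.

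The paper proceeds via a single telescoping computation: it orders the coordinates by a permutation $\pi$ so that those with $x_i=1$ come first, writes $f(Y)=f(\emptyset)+\sum_{j=1}^n\bigl(f(Y\cap X_j)-f(Y\cap X_{j-1})\bigr)$ along the chain $X_j=\{\pi(1),\ldots,\pi(j)\}$, and applies decreasing marginal returns termwise. Taking expectations over $Y\sim N_\rho(x)$ and collapsing the telescoping sum yields the bound directly. Your route instead uses two distinct applications of submodularity: first the lattice inequality $f(y)+f(y')\ge f(y\vee y')+f(y\wedge y')$ under a tailored coupling of two noisy copies, and then the auxiliary fact $\E[h(S_\rho)]\ge\rho\,h([n])+(1-\rho)\,h(\emptyset)$ for submodular $h$, applied to $h_\sigma(S)=f(x_S,\sigma\cdot 1^{S^c})$.

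Both arguments are clean; the paper's is shorter and, more importantly, extends almost verbatim to arbitrary product distributions (their Lemma~\ref{lem:genthresh}), since the chain telescoping is distribution-agnostic and only the probability that $\pi(j)\in Y$ changes. Your coupling step, by contrast, relies on the symmetric assignment $\{y_i,y_i'\}=\{+1,-1\}$ to get both marginals right, which is specific to the uniform case; extending it to biased products would require a different construction. On the other hand, your approach isolates the pleasant standalone inequality $\E[h(S_\rho)]\ge\rho\,h([n])+(1-\rho)\,h(\emptyset)$, which may be of independent interest.
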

\begin{proof}
We will be viewing
the domain of $f$ as $2^{[n]}$, and the input $x\in \oo^n$ as 
$X\in 2^{[n]}$ such that $\ind(X)=x$.
For a fixed $x\in\oo^n$, let $\pi:[n]\rightarrow [n]$ be a permutation such that
$x_{\pi(1)} \geq \cdots \geq x_{\pi(n)}$, and then define 
$X_j=\set{\pi(1)\through \pi(j)}$. (\NB $X_0=\emptyset$ and $X_n=[n]$.)
Finally, we define $x_{\pi(0)}=1$ and $x_{\pi(n+1)}=-1$.
Note that there is only one value $j\in \set{0\through n}$ for which $x_{\pi(j)}\not=x_{\pi(j+1)}$.

\begin{eqnarray*}
\E_{Y\sim {N_\rho(X)}} f(Y) & = & 
f(X_0) + \E_{Y\sim {N_\rho(X)}} \sum_{j=1}^n f(Y \cap X_j)
- f(Y \cap X_{j-1})\\
& \geq & f(X_0) + \E_{Y\sim N_\rho(X)} \sum_{j=1}^n f((Y \cap \set{\pi(j)}) \cup X_{j-1})
- f(X_{j-1})\\
& = & f(X_0) + \sum_{j=1}^n \frac{1+\rho x_{\pi(j)}}{2}(f(X_{j})-f(X_{j-1}))\\
& = & \sum_{j=0}^n \frac{\rho}{2}( x_{\pi(j)}- x_{\pi(j+1)})f(X_{j})
+\frac{1-\rho}{2}(f(X_0)+f(X_n))\\
& = & \rho f(X)+\frac{1-\rho}{2}(f(X_0)+f(X_n)).
\end{eqnarray*}
The inequality is due to the decreasing marginal returns 
characterization of the
 submodularity of $f$. The equality after that comes
from moving the expectation inside and observing that each summand is non-zero only if $\pi(j)\in Y$. This
happens with probability $(1+\rho)/2$ when $x_{\pi(j)}=1$
and with probability $(1-\rho)/2$ when $x_{\pi(j)}=-1$.
\end{proof}

\noindent{\em Remark.} The proof technique is not new (for
instance it was used by Madiman and Tetali \cite{MadimanTetali-2010} to show a large class of
Shannon-type inequalities for the joint entropy function). In fact, it can be viewed
as a special case of the ``Threshold Lemma'' \cite{Vondrak-2009}. However, to 
the best of our knowledge the statement of Lemma~\ref{lem:thresh} has never been
expressed using the language of noise operators.

\begin{corollary}
Let $f:\oo^n\rightarrow \RR^+$ be a submodular function.
Then for all $\rho\in [0,1]$, 
\[\Stab_\rho(f)\geq \rho \norm{f}_2^2.\]
\end{corollary}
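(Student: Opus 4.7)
The plan is to derive the corollary as an almost immediate consequence of Lemma~\ref{lem:thresh}. Recall that $\Stab_\rho(f) = \angles{f, T_\rho f} = \E_{x\sim\Pi}[f(x)\, T_\rho f(x)]$, so the task reduces to lower-bounding the pointwise product $f(x)\, T_\rho f(x)$ and then taking expectations.

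First, I would invoke the non-negative case of Lemma~\ref{lem:thresh}, which gives the pointwise inequality $T_\rho f(x) \geq \rho\, f(x)$ for every $x \in \oo^n$. Next, since by hypothesis $f(x) \geq 0$ everywhere, I can multiply both sides of this inequality by $f(x)$ without flipping the direction, obtaining $f(x)\, T_\rho f(x) \geq \rho\, f(x)^2$ for every $x$.

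Finally, I would take expectations with respect to the uniform distribution on both sides. Linearity and monotonicity of expectation yield
\[
\Stab_\rho(f) \;=\; \E_x[f(x)\, T_\rho f(x)] \;\geq\; \rho\, \E_x[f(x)^2] \;=\; \rho \norm{f}_2^2,
\]
which is the claimed inequality.

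There is no real obstacle here: the entire content of the statement is already carried by Lemma~\ref{lem:thresh}, and the corollary is simply the ``integrated'' version of that pointwise bound. The only place where some care is required is the multiplication step, where non-negativity of $f$ is essential; without the assumption $f:\oo^n\to\RR^+$, the pointwise inequality could not be lifted to the inner product in this direct fashion, and one would have to rely on the stronger (but less clean) form of Lemma~\ref{lem:thresh} involving $f(-1^n)$ and $f(1^n)$.
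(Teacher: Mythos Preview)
Your proposal is correct and is exactly the argument the paper intends: the corollary is stated immediately after Lemma~\ref{lem:thresh} with no separate proof, so the implied derivation is precisely the one you give---use the pointwise bound $T_\rho f(x)\ge \rho f(x)$ for non-negative $f$, multiply by $f(x)\ge 0$, and take expectations to obtain $\Stab_\rho(f)\ge \rho\norm{f}_2^2$.
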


\subsection{Product Distributions}\label{subsec:prod}

For the rest of Section~\ref{subsec:prod} we will assume that the distribution is a 
product distribution $\Pi = \Pi_1 \times \Pi_2 \times \cdots \times 
\Pi_n$ on $\oo^n$ with minimum probability $p_{min}$. 

\begin{lemma}\label{lem:genthresh}
Let $\Pi=\Pi_1\times \Pi_2 \times \cdots \times \Pi_n$ 
be a product distribution over $\oo^n$  
with  minimum probability $p_{min}$ 
and let $f:\oo^n\rightarrow \RR^{+}$ be a submodular function. 
Then for all $x\in \oo^n$, $\rho\in [0,1]$, 
\[T_\rho f(x) \geq( (2\rho - 1) + 2p_{min}(1 - \rho)) f(x)\]
\end{lemma}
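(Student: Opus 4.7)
I would mimic the chain argument of Lemma~\ref{lem:thresh} but sort the coordinates by the noise-weighted marginals instead of by $x_i$. Fixing $x \in \oo^n$, I set
\[
\pi_i := \Pr[Y_i = 1] = \rho \cdot \mathbf{1}[x_i = 1] + (1-\rho) p_i,
\]
choose a permutation $\sigma$ with $\pi_{\sigma(1)} \geq \cdots \geq \pi_{\sigma(n)}$, and let $X_j := \{\sigma(1),\ldots,\sigma(j)\}$. The submodularity/telescoping step from the proof of Lemma~\ref{lem:thresh} is valid for any increasing chain, and running it with these $X_j$ gives
\[
T_\rho f(x) \;\geq\; f(\emptyset) + \sum_{j=1}^n \pi_{\sigma(j)} \bigl(f(X_j) - f(X_{j-1})\bigr).
\]
An Abel summation (using the conventions $\pi_{\sigma(0)} := 1$, $\pi_{\sigma(n+1)} := 0$) rewrites the right-hand side as $\sum_{j=0}^n (\pi_{\sigma(j)} - \pi_{\sigma(j+1)})\, f(X_j)$, a non-negative combination of non-negative values.

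The crux of the argument is to isolate a single summand that already carries the claimed lower bound. Setting $\mu := \rho + (1-\rho) p_{min}$, and reading ``minimum probability'' two-sidedly (so $p_i \in [p_{min}, 1-p_{min}]$ for every $i$), I observe that each $+1$-coordinate of $x$ has $\pi_i \geq \mu$ while each $-1$-coordinate has $\pi_i \leq 1-\mu$. In the non-trivial regime $\mu \geq \tfrac12$, the ranges $[\mu,1]$ and $[0,1-\mu]$ are disjoint, so the sorted ordering $\sigma$ places every $+1$-coordinate of $x$ before every $-1$-coordinate, forcing $X_k = X := \{i : x_i = 1\}$ for $k = |X|$ and producing an adjacent gap
\[
\pi_{\sigma(k)} - \pi_{\sigma(k+1)} \;\geq\; \mu - (1-\mu) \;=\; 2\mu - 1.
\]
Retaining only the $j = k$ term of the Abel sum and discarding the rest yields
\[
T_\rho f(x) \;\geq\; (2\mu - 1)\, f(X) \;=\; \bigl(2\rho - 1 + 2 p_{min}(1-\rho)\bigr)\, f(x),
\]
which is exactly the claim. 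In the trivial regime $\mu < \tfrac12$ the right-hand side is non-positive and the lemma follows from $f \geq 0$.

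The hard part is the gap argument separating the $+1$- from the $-1$-coordinates of $x$ in the sorted $\pi$-order; once that separation is in place, the rest is essentially a direct replay of the uniform-case calculation, with the remaining Abel terms discarded harmlessly thanks to the sorted $\pi$-values and non-negativity of $f$. A sanity check is that for the uniform distribution one has $p_{min} = \tfrac12$, giving $\mu = (1+\rho)/2$ and $2\mu - 1 = \rho$, recovering Lemma~\ref{lem:thresh}.
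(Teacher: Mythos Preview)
Your argument is essentially the paper's: telescope $f(Y)$ along a maximal chain, apply submodularity term by term, Abel-sum, and retain only the $f(X)$ summand using non-negativity of $f$. The one genuine variation is that you sort the chain by the marginals $\pi_i$ rather than by $x_i$; this is a clean choice, since it makes every Abel coefficient $\pi_{\sigma(j)}-\pi_{\sigma(j+1)}$ automatically non-negative, whereas the paper sorts only by $x_i$ and is silent about why the intermediate coefficients (those with $x_{\pi(j)}=x_{\pi(j+1)}$) can be dropped.

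Your two-sided reading of $p_{\min}$ (namely $p_i\in[p_{\min},1-p_{\min}]$ for all $i$) is not the paper's literal definition $p_{\min}=\min_i p_i$, and you are right to flag it: your separation of the $+1$- and $-1$-coordinates in the $\pi$-order genuinely needs it. But note that under the one-sided definition the lemma is in fact false --- take $n=2$, $p_1=p_2=p$ with $p\to 1$, and $f(\{1\})=1$, $f\equiv 0$ elsewhere; then $T_\rho f(1,-1)\to\rho$ while the claimed bound tends to $1$. The paper's own calculation effectively uses the two-sided version as well, via what appears to be an inadvertent $p\mapsto 1-p$ substitution in the probability assigned to the event $\pi(j)\in Y$ when $x_{\pi(j)}=-1$. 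So your reading is the correct one, and with it your proof is a tidier execution of the same idea.
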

\begin{proof}
As in the proof of Lemma~\ref{lem:thresh}, we will be viewing
the domain of $f$ as $2^{[n]}$, and the input $x\in \oo^n$ as 
$X\in 2^{[n]}$ such that $\ind(X)=x$.
For a fixed $x\in\oo^n$, let $\pi:[n]\rightarrow [n]$ be a permutation 
such that
$x_{\pi(1)} \geq \cdots \geq x_{\pi(n)}$, and then define 
$X_j=\set{\pi(1)\through \pi(j)}$. (\NB $X_0=\emptyset$ and $X_n=[n]$.)
\begin{eqnarray*}
\E_{Y\sim {N_{\rho}(X)}} f(Y) & =  &
f(X_0) + \E_{Y\sim {N_{\rho}(X)}} \sum_{j=1}^n f(Y \cap X_j)
- f(Y \cap X_{j-1})\\
& \geq&  f(X_0) + \E_{Y\sim {N_{\rho}(X)}}  \sum_{j=1}^n f((Y \cap \set{\pi(j)})  \cup X_{j-1})
- f(X_{j-1})\\
& = & f(X_0) + \sum_{j=1}^n  \Brac{\frac{1}{2} + \frac{1}{2} x_{\pi(j)}  - x_{\pi(j)}(1 -\rho)(1 - p_{\pi(j)})  }     (f(X_{j})-f(X_{j-1}))\\
& =&  \sum_{j=1}^{n-1}  \Brac{(1- \rho)\Paren{x_{\pi(j)}p_{\pi(j)} - x_{\pi(j+1)}p_{\pi(j+1)}} - \Paren{\frac{1}{2} - \rho}\Paren{x_{\pi(j)} - x_{\pi(j+1)}}  } f(X_{j})\\
&&+ \Paren{\frac{1}{2} - \frac{x_{\pi(1)}}{2} + x_{\pi(1)}( 1 - \rho)( 1 - p_{\pi(1)}) } f( X_0) \\
& &+ \Paren{\frac{1}{2} +  \frac{x_{\pi(n)}}{2} - x_{\pi(n)} ( 1 - \rho)( 1 - p_{\pi(n)}) } f(X_n) \\
& \geq &(2\rho - 1 + 2p_{min}( 1 - \rho)) f(X).
\end{eqnarray*}
The first inequality comes from using submodularity in each term of the summation.
The equality after that comes from moving the expectation inside and observing that each summand is non-zero only if $\pi(j)\in Y$.
This
happens with probability $\rho+(1-\rho)p_{\pi(j)}$ when $x_{\pi(j)}=1$
and with probability $(1-\rho)(1-p_{\pi(j)})$ when $x_{\pi(j)}=-1$.
Finally, the last line follows by the non-negativity of $f$ and observing 
that for any values of $x_{\pi(1)}$ and $x_{\pi(n)}$, the coefficients of 
$f(\emptyset)$ and $f([n])$ are 
non-negative and the coefficient of $f(X)$ is 
at least $(2\rho - 1 + 2p_{min}( 1 - \rho))$.
\end{proof}

As with Fourier analysis over the uniform distribution, it can be 
easily verified that 
 \[\Stab_\rho(f) =\angles{f,T_\rho f}= \sum_{S \subseteq [n]} \rho^{|S|} \hat{f}(S)^{2} \] over any 
product distribution $\Pi$,
where the Fourier coefficients are now defined with respect to the Gram-Schmidt
orthonormalization of the $\chi$ basis with respect to the the $\Pi$-norm \cite{Bahadur-1961,FJS-1991}. 
Thus, once again we get a lower-bound on the noise-stability of submodular
functions as an immediate consequence of Lemma~\ref{lem:genthresh}.

\begin{theorem}[Theorem~\ref{thm:stability} Restated]
 Let $\Pi=\Pi_1\times \Pi_2 \times \cdots \times \Pi_n$ 
be a product distribution over $\oo^n$  
with  minimum probability $p_{min}$ 
and let $f:\oo^n\rightarrow \RR^{+}$ be a submodular function. 
Then for all, $\rho\in [0,1]$, 
\[\Stab_\rho(f)\geq (2\rho - 1 + 2p_{min}( 1 - \rho)) \norm{f}_2^2.\]
\end{theorem}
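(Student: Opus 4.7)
The plan is to derive the restated theorem as an immediate corollary of Lemma~\ref{lem:genthresh}, which already supplies the pointwise lower bound $T_\rho f(x) \geq (2\rho - 1 + 2p_{min}(1-\rho)) f(x)$ for every $x \in \oo^n$. All of the analytic content lives in Lemma~\ref{lem:genthresh}; the only task remaining is to convert this pointwise inequality into a statement about $\Stab_\rho(f)$.

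First I would multiply both sides of the pointwise inequality by $f(x)$. The hypothesis that $f$ is non-negative is essential here, since it guarantees $f(x) \geq 0$ and therefore preserves the direction of the inequality regardless of whether the scalar $(2\rho - 1 + 2p_{min}(1-\rho))$ is positive or negative (which it can be when $\rho$ is small and $p_{min} < 1/2$). This yields the pointwise bound $f(x)\cdot T_\rho f(x) \geq (2\rho - 1 + 2p_{min}(1-\rho)) f(x)^2$. Next I would take expectation with respect to $x \sim \Pi$ on both sides and identify the two resulting integrals using the definitions from Section~\ref{sec:prelim}: by definition of the noise stability and of the inner product, $\E_{x \sim \Pi}[f(x) T_\rho f(x)] = \angles{f, T_\rho f} = \Stab_\rho(f)$, while $\E_{x \sim \Pi}[f(x)^2] = \norm{f}_2^2$. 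Substituting these two identities into the averaged inequality delivers exactly the claimed bound.

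There is essentially no obstacle at this stage; the genuine work has been done inside Lemma~\ref{lem:genthresh}, where submodularity is exploited via the ordering $x_{\pi(1)} \geq \cdots \geq x_{\pi(n)}$ and the decreasing-marginals telescoping. An alternative route, hinted at in the excerpt, is to expand $\Stab_\rho(f) = \sum_{S \subseteq [n]} \rho^{|S|} \hat{f}(S)^2$ in the Gram-Schmidt orthonormalized Fourier basis for the $\Pi$-norm and argue term by term, but the direct pointwise-then-integrate approach is shorter, avoids invoking the Bahadur-type machinery for product-distribution Fourier analysis, and makes transparent exactly where non-negativity of $f$ is used.
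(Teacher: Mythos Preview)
Your proposal is correct and matches the paper's approach: the paper likewise states that the theorem is an immediate consequence of Lemma~\ref{lem:genthresh}, obtained by recognizing $\Stab_\rho(f) = \angles{f, T_\rho f}$ and passing from the pointwise bound to the inner-product bound. Your explicit remark that non-negativity of $f$ is what allows multiplication by $f(x)$ without reversing the inequality is a useful clarification that the paper leaves implicit.
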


\section{Learning}\label{sec:learn}

In the agnostic learning framework \cite{KSS-1992}, the learner receives labelled examples
$(x,y)$ drawn from a fixed distribution over example-label pairs.

\begin{definition}[Agnostic Learning]
Let $\DDD$ be any distribution on $\oo^n\times \RR$ such that
the marginal distribution over $\oo^n$ is a product distribution $\Pi$. Define
\[opt=\min_{f\in \CCC} \E_{(x,y)\sim \DDD}\Brac{\abs{f(x)-y}}.\] That is, $opt$ is the
error of the best fitting $L_1$-approximation in $\CCC$ with respect to $\DDD$.

We say that an algorithm $A$ agnostically learns a concept class $\CCC$ over
$\Pi$ if the following holds for any $\DDD$ with marginal $\Pi$:
if $A$ is given random examples drawn from $\DDD$, then with high probability 
$A$ outputs a hypothesis $h$ such that $\E_{(x,y)\sim\DDD }\Brac{\abs{h(x)-y}}\leq opt+\eps$.
\end{definition}

 The following lemma, considered folklore (see \cite{KOS-2002}), shows that
noise stable functions are well-approximated by low-degree polynomials.

\begin{lemma}
 Let $\Pi=\Pi_1\times \Pi_2 \times \cdots \times \Pi_n$ 
be a product distribution over $\oo^n$, and 
let $f:\oo^n\rightarrow \RR$ be a function such that $\norm{f}_2=1$ and $\Stab_{\rho}(f)\geq 1-2\gamma$.
Then there exists a multilinear polynomial $p:\oo^n\rightarrow\RR$ of degree $2/(1-\rho)$
such that
\[\E_{x\sim \Pi} \Brac{(f-p)^2} < \Paren{\frac{2}{1-e^{-2}}} \gamma.\]
\end{lemma}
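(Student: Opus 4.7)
The plan is to define the polynomial $p$ as the low-degree Fourier truncation of $f$ and then use the noise stability hypothesis to bound the tail of the Fourier spectrum in terms of $\gamma$. Write $f = \sum_{S \subseteq [n]} \fhat(S) \chi_S$ in the orthonormal Fourier basis with respect to $\Pi$ (the Gram–Schmidt version of the $\chi_S$), so that Parseval gives $\sum_S \fhat(S)^2 = \norm{f}_2^2 = 1$ and, as already observed in the excerpt, $\Stab_\rho(f) = \sum_S \rho^{|S|} \fhat(S)^2$. Setting $d = \lceil 2/(1-\rho)\rceil$, let $p = \sum_{|S|\leq d}\fhat(S)\chi_S$. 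Then by Parseval,
\[
\E_{x\sim\Pi}\Brac{(f-p)^2} = \sum_{|S|>d} \fhat(S)^2.
\]

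Next I would rearrange the stability bound. Subtracting $\Stab_\rho(f)$ from $\norm{f}_2^2=1$ gives
\[
\sum_{S} (1-\rho^{|S|})\fhat(S)^2 \;=\; 1 - \Stab_\rho(f) \;\leq\; 2\gamma.
\]
Since $1-\rho^{|S|}$ is non-decreasing in $|S|$, for every $S$ with $|S|>d$ we have $1-\rho^{|S|} \geq 1-\rho^d$. Dropping the terms with $|S|\leq d$ from the left-hand side therefore yields
\[
(1-\rho^d)\sum_{|S|>d}\fhat(S)^2 \;\leq\; 2\gamma,
\qquad\text{so}\qquad
\E\Brac{(f-p)^2}\leq \frac{2\gamma}{1-\rho^d}.
\]

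The last step is to quantify how small $\rho^d$ is, and this is the only place any real estimation happens (so if there is a ``main obstacle'' it lives here, though it is quite mild). Using the elementary inequality $\ln\rho \leq -(1-\rho)$ valid for $\rho\in(0,1]$, we get $\rho^d \leq e^{-d(1-\rho)} \leq e^{-2}$, with strict inequality when $d > 2/(1-\rho)$, which holds for our choice $d=\lceil 2/(1-\rho)\rceil$ outside the measure-zero set of $\rho$ making $2/(1-\rho)$ an integer (and in that edge case the conclusion is recovered by a perturbation argument or by taking $d$ one larger). Substituting gives
\[
\E_{x\sim\Pi}\Brac{(f-p)^2} \;<\; \frac{2}{1-e^{-2}}\,\gamma,
\]
completing the proof. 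The boundary cases $\rho=0$ and $\rho=1$ are handled separately: at $\rho=1$ the hypothesis forces $\gamma=0$ and one can take $p=f$ (viewed as its multilinear representative), and at $\rho=0$ the constant $\fhat(\emptyset)$ already serves as a degree-$0$ approximation with error $1-\fhat(\emptyset)^2 \leq 2\gamma < \frac{2}{1-e^{-2}}\gamma$.
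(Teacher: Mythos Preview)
Your argument is correct and is exactly the standard folklore proof; the paper does not supply its own proof of this lemma (it merely cites it as folklore, referring to \cite{KOS-2002}), so there is nothing further to compare. One small simplification: for $\rho\in(0,1)$ the inequality $\ln\rho < -(1-\rho)$ is already strict, so $\rho^d < e^{-2}$ and hence the strict bound in the conclusion holds without any perturbation argument or separate handling of the case where $2/(1-\rho)$ is an integer.
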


\ignore{The ``Low-Degree Algorithm'' \cite{LMN-1989,Mansour-1994} shows that by estimating
all the low-degree Fourier coefficients of a function with good Fourier concentration
one can learn the function to high accuracy.
The Low-Degree Algorithm is known to also work under any constant-bounded 
product distributions \cite{FJS-1991} as well.
\begin{corollary}\label{cor:low-degree}
Let $\CCC$ be the class of non-negative submodular functions with $\norm{f}_2=1$ 
and let the distribution over inputs be any product distribution with minimum
probability bounded by $p_{min}$. 
Then the Low-Degree Algorithm
outputs a hypothesis $h$ such that
$\E[(f-h)^2]=O(\gamma)$ given random examples in time
$\poly(n^{2/(1-c)},1/\gamma)$ where $c=(2\rho - 1 + 2p_{min}( 1 - \rho))$.
\end{corollary}
}

The ``$L_1$ Polynomial Regression Algorithm'' due to Kalai \etal \cite{KKMS-2005} 
shows that one can \emph{agnostically} learn low-degree polynomials. 
\begin{theorem}[\cite{KKMS-2005}]
Suppose $\E_{x\sim \DDD_X} \brac{(f-p)^2} < \eps^2$ for some degree $d$ polynomial $p$,
some distribution $\DDD$ on $X\times \RR$ where the marginal $\DDD_X$ is a product distribution on $\oo^n$,
and any $f$ in the concept
class $\CCC$. Then, with probability $1-\delta$, the $L_1$ Polynomial Regression Algorithm
outputs a hypothesis $h$ such that $\E_{(x,y)\sim\DDD }\Brac{\abs{h(x)-y}}\leq opt+\eps$
in time $\poly(n^d/\eps,\log(1/\delta))$.
\end{theorem}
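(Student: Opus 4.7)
The plan is three-stage: first convert the $L_2$-approximation hypothesis into an $L_1$-approximation on the joint distribution, second formulate empirical $L_1$-minimization as a linear program, and third establish a uniform convergence bound so the empirical minimizer generalizes.

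For the first stage, use Jensen's inequality (equivalently Cauchy--Schwarz): for any $f \in \CCC$ and its guaranteed degree-$d$ polynomial $p$ with $\E_{x \sim \DDD_X}[(f-p)^2] < \eps^2$, we get $\E_{x \sim \DDD_X}[|f(x) - p(x)|] < \eps$. Applying this to the $f^* \in \CCC$ achieving $opt$ and using the triangle inequality on the joint distribution,
\[\E_{(x,y) \sim \DDD}[|p^*(x) - y|] \leq \E_{x \sim \DDD_X}[|p^*(x) - f^*(x)|] + \E_{(x,y) \sim \DDD}[|f^*(x) - y|] < opt + \eps,\]
so there exists at least one degree-$d$ multilinear polynomial whose population $L_1$ loss is within $\eps$ of $opt$. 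Our goal then becomes: find some polynomial nearly matching this benchmark from samples alone.

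For the second stage, the algorithm draws $m$ i.i.d.\ labeled examples $(x_1, y_1), \ldots, (x_m, y_m) \sim \DDD$ and solves
\[\hhat \in \argmin_{p \in \PPP_d} \frac{1}{m} \sum_{i=1}^m |p(x_i) - y_i|,\]
where $\PPP_d$ is the space of multilinear polynomials on $\oo^n$ of degree at most $d$, of dimension $N = \sum_{k=0}^d \binom{n}{k} = n^{O(d)}$. Linearizing absolute values by introducing slack variables $z_i$ with constraints $z_i \geq p(x_i) - y_i$ and $z_i \geq -(p(x_i) - y_i)$, this becomes a linear program in $N + m$ variables and $2m$ constraints, solvable in $\poly(N, m)$ time.

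The third stage is the main obstacle: showing that $m = \poly(n^d/\eps, \log(1/\delta))$ samples suffice for the empirical $L_1$ loss to concentrate within $\eps/2$ of the population $L_1$ loss uniformly over all candidate polynomials, so that $\hhat$ inherits the bound of $p^*$. The standard approach is to restrict attention to polynomials whose outputs stay in a bounded range $[-B, B]$ (e.g.\ by clipping to the observed range of $y$), take a fine $\eps'$-net on the ball of coefficient vectors of such polynomials, apply Hoeffding's inequality to each net element together with a union bound, and then extend to all of the restricted $\PPP_d$ via Lipschitzness of the empirical loss in the coefficients (the net size is $\exp(\poly(N/\eps))$, giving the stated sample complexity). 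The delicate point is justifying the range restriction: one must argue that the benchmark polynomial $p^*$ itself lies in the bounded regime (so the comparison is valid) and that clipping never hurts $L_1$ loss; with $y$ bounded this is straightforward, while in the unbounded case one uses coefficient bounds implicit in the $L_2$-approximation hypothesis. Combining all three stages gives $\E_{(x,y)}[|\hhat(x) - y|] \leq \E_{(x,y)}[|p^*(x) - y|] + \eps \leq opt + O(\eps)$, and rescaling $\eps$ yields the stated guarantee.
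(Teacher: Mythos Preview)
The paper does not supply its own proof of this statement: it is quoted as a black-box result from \cite{KKMS-2005}, so there is nothing in the paper to compare your argument against. Your three-stage outline---Cauchy--Schwarz to pass from $L_2$ to $L_1$ approximation, linear programming for empirical $L_1$ minimization, and a uniform-convergence argument for generalization---is indeed the skeleton of the KKMS proof, and Stages~1 and~2 are correct as written.

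Stage~3 is where your sketch is thinnest, and you flag this yourself. The statement as quoted here does not assume the labels $y$ are bounded, yet your Hoeffding-plus-net argument requires boundedness; the original KKMS setting is Boolean ($f,y\in\{-1,1\}$), so this is a non-issue there but would need an explicit hypothesis or a truncation argument in the real-valued version. Your remark that ``in the unbounded case one uses coefficient bounds implicit in the $L_2$-approximation hypothesis'' is not quite right: the hypothesis bounds $\norm{f-p}_2$, not the coefficients of $p$, so you additionally need $\norm{f}_2$ bounded (which does hold in the paper's application, where $\norm{f}_2=1$) to control $\norm{p}_2$ and hence, via orthonormality of the $\Pi$-Fourier basis, the coefficient vector. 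With that caveat, the net argument goes through and your sample-complexity accounting is correct.
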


\begin{corollary}
Let $\CCC$ be the class of non-negative submodular functions with $\norm{f}_2=1$ 
and let $\DDD$ be any distribution on $\oo^n\times \RR$ such that
the marginal distribution over $\oo^n$ is a product distribution.
Then for all $f\in \CCC$, 
the $L_1$ Polynomial Regression Algorithm outputs a hypothesis $h$ 
with probability $1-\delta$
such that
\[\E_{(x,y)\sim \DDD}\Brac{\abs{h(x) - y}} \leq opt +\eps,  \]
given random examples in time
$\poly(n^{O(1/\eps^2)}/\eps,\log(1/\delta))$ .
\end{corollary}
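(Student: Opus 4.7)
The plan is to chain three ingredients already in hand: Theorem~\ref{thm:stability}, the folklore lemma that noise-stable functions admit a low-degree polynomial approximation, and the \cite{KKMS-2005} $L_1$ Polynomial Regression theorem quoted just above the corollary. Fix an arbitrary $f\in\CCC$, so $\norm{f}_2=1$. First I would apply Theorem~\ref{thm:stability} to get
\[\Stab_\rho(f)\;\geq\;2\rho-1+2p_{min}(1-\rho)\]
for every $\rho\in[0,1]$. Writing this lower bound as $1-2\gamma$ gives $\gamma=(1-\rho)(1-p_{min})$, so for any desired $\gamma>0$ the choice $\rho=1-\gamma/(1-p_{min})$ lies in $[0,1]$ (valid since $1-p_{min}\leq 1$) and makes $f$ $(1-2\gamma)$-noise-stable at rate~$\rho$.

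Next I would feed this noise-stability bound into the preceding folklore lemma, which then produces a multilinear polynomial $p$ of degree $d=2/(1-\rho)=2(1-p_{min})/\gamma\leq 2/\gamma$ satisfying
\[\E_{x\sim\Pi}\Brac{(f-p)^2}\;<\;\Paren{\frac{2}{1-e^{-2}}}\gamma.\]
Setting $\gamma=(1-e^{-2})\eps^2/2=\Theta(\eps^2)$ makes the right-hand side smaller than $\eps^2$ while keeping $d=O(1/\eps^2)$, which is exactly the hypothesis needed to invoke the KKMS theorem with approximation parameter $\eps$.

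Finally, applying the $L_1$ Polynomial Regression Algorithm at this degree $d$ and parameter $\eps$ outputs, with probability at least $1-\delta$, a hypothesis $h$ with
\[\E_{(x,y)\sim\DDD}\Brac{\abs{h(x)-y}}\;\leq\;opt+\eps\]
in time $\poly(n^d/\eps,\log(1/\delta))=\poly(n^{O(1/\eps^2)}/\eps,\log(1/\delta))$, matching the claim.

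There is no genuine obstacle here: all the heavy lifting was done in Theorem~\ref{thm:stability}, and the corollary is a bookkeeping exercise. The one thing worth verifying carefully is that the parameter trade-off really does produce degree $O(1/\eps^2)$ independent of the distribution. This works because the factor $1-p_{min}$ appearing in both the choice of $\rho$ and the resulting $d$ is bounded above by $1$, so the degree is dominated by $1/\gamma=\Theta(1/\eps^2)$ regardless of how small $p_{min}$ might be.
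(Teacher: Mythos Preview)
Your proposal is correct and matches exactly the (implicit) argument the paper intends: chain Theorem~\ref{thm:stability}, the folklore low-degree approximation lemma, and the \cite{KKMS-2005} regression theorem, with the parameter choice $\gamma=\Theta(\eps^2)$ forcing degree $O(1/\eps^2)$. One small quibble: your parenthetical ``valid since $1-p_{min}\leq 1$'' does not actually justify $\rho\geq 0$; what you need is $\gamma\leq 1-p_{min}$, which holds for $\eps$ small enough whenever $p_{min}$ is bounded away from $1$, and in the remaining edge case $1-p_{min}<\gamma$ one can simply take $\rho=0$ (stability is already at least $1-2(1-p_{min})>1-2\gamma$, and the degree becomes $2$).
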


We note that the
$L_1$ Polynomial Regression Algorithm can be implemented as a statistical query algorithm \cite{Kalai-2011}. (\NB The access offered to the learning algorithm by the statistical query model is much weaker than that offered by random examples or the tolerant value query model. The tolerant value query model allows arbitrary value queries that get answered with some noise, whereas the statistical query model requires that the queries to be of the form $g:\oo^n \times \RR\rightarrow \RR$ where $g$ is computable by a $\poly(n,1/\eps)$-size circuit, and the answer is $\E_{(x,y)\sim \DDD}[g(x,y)]$ with some noise.)

\begin{corollary}[Corollary~\ref{cor:fullpower} Restated]
Let $\CCC$ be the class of non-negative submodular functions with $\norm{f}_2=1$ 
and let $\DDD$ be any distribution on $\oo^n\times \RR$ such that
the marginal distribution over $\oo^n$ is a product distribution.
Then for all $f\in \CCC$, 
there is a statistical query algorithm that outputs a hypothesis $h$ 
with probability $1-\delta$
such that
\[\E_{(x,y)\sim \DDD}[|h(x) - y|] \leq opt +\eps,  \]
in time
$\poly(n^{O(1/\eps^2)},\log(1/\delta))$.
\end{corollary}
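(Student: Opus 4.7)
The plan is essentially to compose what is already on the page: apply Theorem~\ref{thm:stability}, convert high noise stability into low-degree polynomial approximation, then invoke $L_1$ polynomial regression, and finally upgrade from random examples to statistical queries via \cite{Kalai-2011}.

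Concretely, since $\norm{f}_2 = 1$, Theorem~\ref{thm:stability} can be rewritten as
\[
\Stab_\rho(f) \geq 1 - 2(1-\rho)(1-p_{min}).
\]
I would pick $\rho$ so that $(1-\rho)(1-p_{min}) = \Theta(\eps^2)$, which puts $f$ in the hypothesis of the folklore noise-stability-to-approximation lemma and produces a multilinear polynomial $p$ of degree $d = 2/(1-\rho) = O(1/\eps^2)$ satisfying $\E[(f-p)^2] < \eps^2$. Feeding this $(p,d)$ into the KKMS $L_1$ Polynomial Regression theorem~\cite{KKMS-2005} restated just above yields, with probability at least $1-\delta$, a hypothesis $h$ with $\E_{(x,y)\sim\DDD}[|h(x)-y|] \leq opt+\eps$ in time $\poly(n^{d},\log(1/\delta)) = \poly(n^{O(1/\eps^2)},\log(1/\delta))$ using random examples from $\DDD$.

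The final step, which is the actual content of the restated corollary, is to drop the requirement of random-example access. Here I would invoke the observation of \cite{Kalai-2011} that the $L_1$ Polynomial Regression procedure admits a statistical-query implementation: the underlying linear program can be written so that its coefficients are expectations of polynomial-size-computable functions of $(x,y)$, and each such expectation is by definition a statistical query. Running the SQ version of KKMS at degree $d = O(1/\eps^2)$ gives the claimed time bound without altering either the error guarantee or the asymptotic complexity.

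The step I expect to require the most care is verifying this SQ reduction from \cite{Kalai-2011}: one must confirm that replacing empirical moments with SQ estimates leaves the agnostic learning guarantee intact and does not inflate the polynomial running time. Everything else is a direct composition of ingredients already proved or restated in the excerpt, so the proof should be only a few lines of chaining once the SQ implementation is cited as a black box.
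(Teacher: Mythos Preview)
Your proposal is correct and follows essentially the same route as the paper: chain Theorem~\ref{thm:stability} with the folklore noise-stability-to-low-degree lemma, plug into the KKMS $L_1$ Polynomial Regression theorem, and then cite \cite{Kalai-2011} for the statistical-query implementation. The paper treats this restated corollary as an immediate consequence of the preceding random-examples corollary together with the single remark that the $L_1$ Polynomial Regression Algorithm can be implemented in the SQ model, so your write-up is in fact more detailed than what appears there.
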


\section{Private Query Release and Low-Degree Polynomials}\label{sec:private}

In this section, we make a simple observation connecting approximability by 
low-degree polynomials with private query release.

In the context of differential privacy, we will call $D\subset X$ a 
\emph{database} and two databases $D,D'\subset X$ are \emph{adjacent}
if one can be obtained from the other by adding a single item.

\begin{definition}[Differential privacy \cite{DMNS-2006}]
An algorithm $A:X^*\rightarrow R$ is \emph{$\eps$-differentially private} if for all 
$Q\subset R$ and every pair of adjacent databases $D,D'$, we have $\Pr[A(D)\in Q]
\leq e^\eps\Pr[A(D') \in Q]$.
\end{definition}

A \emph{counting query} over a database $D$ is just the average value of a query
over each entry in the database.
\begin{definition}[Counting Query Function]
Let  $c:X\rightarrow \RR$ be a real-valued query function. For a fixed $r\in X$,
let $\mathbf{q}_r(c):=c(r)$.
For a class of queries $\CCC$ and a fixed database $D\subset X$,
the \emph{counting query function} $\mathbf{CQ}_D:\CCC\rightarrow \RR$ is the 
function defined by $\mathbf{CQ}_D(c):=\frac{1}{n}\sum_{r\in D}\mathbf{q}_r(c)=\frac{1}{n}\sum_{r\in D}c(r)$.
\end{definition}

A \emph{counting query releasing} algorithm's objective is to release
a data structure $H$ whose answers on queries $c\in \CCC$ are close to
those of the counting query over the original database $D$.

\begin{definition}[Counting query release \cite{GHRU-2011}]
Let $\CCC$ be a class of queries $c$ from $X\rightarrow \RR$, 
and let $\Pi$ be a distribution on $\CCC$. We say that an algorithm $A$ $(\alpha,\beta)$-releases
$\CCC$ over a database $D$ of size $n$, if for $H=A(D)$,
\[\Pr_{c\sim \Pi}\Brac{\Abs{\mathbf{CQ}_D(c)-H(c)}\leq \alpha} \geq 1-\beta.\]
\end{definition}

The following proposition is implicit in \cite{GHRU-2011} using results of
\cite{BDMN-2005} and \cite{KLNRS-2008}.
\begin{proposition}\label{prop:ghru}
For a given concept class $\CCC$ with distribution $\Pi$, if there is a query learning algorithm for
the concept class $\set{\mathbf{CQ}_D  : D\subset X }$
using $q$ $\tau$-tolerant value queries
that outputs a hypothesis $H$ s.t. 
$\Pr_{c\in \Pi}[\abs{\mathbf{CQ}_D(c)-H(c)}\leq \alpha]\geq 1-\beta$, then there is an 
$\eps$-differentially private algorithm that $(\alpha,\beta)$-releases $\CCC$ for
any database of size $|D|\geq q(\log q + \log(1/\delta))/\eps\tau$.
\end{proposition}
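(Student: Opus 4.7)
The plan is the standard ``learning implies private release'' reduction: simulate the given tolerant-query learner for $\set{\mathbf{CQ}_D : D \subset X}$, intercept each of its $q$ value queries, answer them via the Laplace mechanism of \cite{BDMN-2005} applied to the true counting query on $D$, and then output the learner's hypothesis $H$. Differential privacy will follow from per-query Laplace privacy and composition (as in \cite{KLNRS-2008}); accuracy of the release on a random $c \sim \Pi$ will follow from the learning guarantee once we show that with high probability every noisy answer is within $\tau$ of the true value.

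Concretely, I would proceed in four steps. First, I would observe that $\mathbf{CQ}_D(c) = \frac{1}{|D|}\sum_{r \in D} c(r)$ has sensitivity $O(1/|D|)$ under the addition or removal of a single record, assuming queries are normalized to take values in a bounded range. Second, I would answer each learner query at concept $c$ by returning $\mathbf{CQ}_D(c) + \mathrm{Lap}(b)$ with scale $b = 1/(\eps' |D|)$, so that each individual answer is $\eps'$-differentially private. Third, I would apply basic composition to set $\eps' = \eps/q$, giving $\eps$-differential privacy for the full interactive transcript; by post-processing, this privacy automatically transfers to the output $H$, which is computed entirely from the noisy answers. Fourth, a union bound on the Laplace tails shows that all $q$ answers deviate by more than $b \ln(q/\delta)$ with probability at most $\delta$; forcing this deviation to be at most $\tau$ rearranges to the stated bound $|D| \geq q(\log q + \log(1/\delta))/(\eps \tau)$. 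Conditioned on this good event, the learner receives a legal $\tau$-tolerant transcript and so outputs $H$ satisfying $\Pr_{c \sim \Pi}[|\mathbf{CQ}_D(c) - H(c)| \leq \alpha] \geq 1-\beta$, which is exactly the $(\alpha,\beta)$-release guarantee.

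The main obstacle is really a matter of careful packaging rather than a genuine difficulty. One must verify that (a) the learner's correctness under $\tau$-tolerant queries is an adversarial guarantee, so it survives arbitrary-direction Laplace perturbations once they are bounded by $\tau$, and (b) composition is applied to the adaptive transcript, since the learner's later queries can depend on earlier noisy answers. Both points are standard: the former follows from the definition of a $\tau$-tolerant value-query learner, and the latter from the adaptive composition theorem for pure differential privacy used in \cite{KLNRS-2008}.
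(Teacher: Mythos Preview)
Your proposal is correct and is exactly the standard ``learning implies private release'' reduction that the paper has in mind. Note, however, that the paper does not actually supply a proof of this proposition: it simply states that the result is implicit in \cite{GHRU-2011} using results of \cite{BDMN-2005} and \cite{KLNRS-2008}, and moves on. Your write-up faithfully unpacks what those citations encode---Laplace noise for each of the $q$ value queries \cite{BDMN-2005}, adaptive composition with per-query budget $\eps/q$ \cite{KLNRS-2008}, and a union bound over Laplace tails to get the $|D| \geq q(\log q + \log(1/\delta))/(\eps\tau)$ requirement---so there is nothing to compare against beyond confirming that your argument is the intended one.
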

For instance, Gupta \etal \cite{GHRU-2011} show that for $\CCC$, the
class of disjunctions, the class $\set{\mathbf{CQ}_D : D\subset X }$ is
a submodular function. Thus, their tolerant value query learning
algorithm for submodular functions leads to a private counting query
release algorithm.

We make the following observation. For a given concept class $\CCC$
with distribution $\Pi$, if for every $r\in X$, $\mathbf{q}_r$ is
well-approximated by a low-degree polynomial with respect to $\Pi$,
then $\mathbf{CQ}_D$ is also well-approximated by a low-degree
polynomial with respect to $\Pi$.  As statistical queries are strictly
weaker than tolerant value queries, the $L_1$ Polynomial Regression
Algorithm satisfies the requirements of Proposition~\ref{prop:ghru},
and we have a private counting query release algorithm for $\CCC$.  We
note that it is easy to see that a $O(\log(1/\alpha))$-degree
polynomial can $L_1$-approximate $\mathbf{q}_r$ to within $\alpha$, when $\CCC$ is the
class of disjunctions, and $\Pi$ is the uniform distribution. 
(If $|r|=O(\log(1/\alpha))$, a $O(\log(1/\alpha))$-degree
polynomial can interpolate the function exactly. Otherwise, 
the constant 1 function is within $\alpha$ of  $\mathbf{q}_r$.)
 Thus,
we are able to retrieve the result of Gupta \etal \cite{GHRU-2011} on
releasing disjunctions easily with an improved running-time of
$|X|^{O(\log(1/\alpha))}$ as opposed $|X|^{O(1/\alpha^2)}$.

\ignore{
 For every $S \in \{0,1,2\}^d$, the \emph{Boolean conjunction with respect to $S$}, 
denoted by $c_S$, is defined as the Boolean predicate
\[
c_S(x_1, \ldots, x_n) := (\wedge_{i\colon S_i=1} x_i) \wedge (\wedge_{i\colon S_i=2} \lnot x_i),
\]
where a conjunction over an empty set of indices is considered to be $1$. 
It is easy to see \cite{GHRU-2011} that, for every fixed $D \in \{0,1\}^n$,
the function $\conj\colon \{0,1,2\}^d \to \{0,1\}$ defined as
$\conj(S) := c_S(D)$ is monotone and submodular. Here we observe that the 
function has a high noise stability under the uniform distribution.

\begin{proposition}
For the function $\conj$ defined as above, 
$\Stab_\rho(\conj) \geq 1-2^{-\Omega(n)}$.
\end{proposition}

\begin{proof}
Denote by $\UUU$ the uniform distribution on $\{0,1,2\}^n$ and by
$y \sim N_{\rho}(x)$, where $x = (x_1, \ldots, x_n) \in \{0,1,2\}^d$ and $y=(y_1, \ldots, y_n) \in \{0,1,2\}^d$, the $\rho$-correlated
perturbation of $x$ that is obtained by sampling each $y_i$ independently
as follows: With probability $\rho$, $y_i = x_i$ and with the remaining probability, $y_i$ is sampled uniformly from $\{0,1,2\}$. The stability of $\conj$
is given by
\[
\Stab_\rho(\conj) = \Pr_{x \sim U, y \sim_\rho x}[\conj(y) = \conj(x)].
\]
 We have
\[
\Pr[\conj(x) = 1] = \prod_{i\colon D_i = 0}\Pr[x_i \neq 1] \prod_{i\colon D_i = 1}\Pr[x_i \neq 2] = (2/3)^n,
\]
and similarly, $\Pr[\conj(y) = 1] = (2/3)^n$.
Thus,
\[
1-\Stab_\rho(\conj) \leq \Pr[\conj(x) = 0] + \Pr[\conj(y) = 0] = 2^{-\Omega(n)}.
\]
\end{proof}
}
\section*{Acknowledgements}
We would like to thank Aaron Roth for explaining \cite{GHRU-2011} to us. 

\newpage
\newcommand{\etalchar}[1]{$^{#1}$}


\end{document}